%%%%%%%% ICML 2025 EXAMPLE LATEX SUBMISSION FILE %%%%%%%%%%%%%%%%%

\documentclass{article}
\usepackage{booktabs}       % professional-quality tables
\usepackage{amsfonts}       % blackboard math symbols
\usepackage{amssymb}
\usepackage{amsmath}
\usepackage{nicefrac}       % compact symbols for 1/2, etc.
\usepackage{microtype}      % microtypography
\usepackage[svgnames]{xcolor}         % colors
\usepackage{soul}
\usepackage{multirow}
\usepackage{enumerate}
\usepackage{colortbl}
\usepackage{graphicx}
\usepackage{caption}
\usepackage{subcaption}
\usepackage{xr}
\usepackage{array}
\usepackage{multicol}
\usepackage{natbib}
\usepackage[vlined, ruled, linesnumbered]{algorithm2e}
\usepackage{enumitem}
\usepackage{xr}
\usepackage{array}
\usepackage{float}
\usepackage{hhline}
\usepackage{tikz}
\usetikzlibrary{calc,arrows}

\newcommand{\name}{TensorSLM}

\newcolumntype{?}{!{\vrule width 1pt}}
\newcommand{\tikzmark}[1]{\tikz[remember picture,overlay]\node[coordinate] (#1) {};}

\usepackage{thmtools}
\usepackage{thm-restate}

\usepackage[pagebackref=true,breaklinks=true,colorlinks,bookmarks=false]{hyperref}
\hypersetup{
    linkcolor=Maroon,      % internal links
    urlcolor=LightSkyBlue,      % external links/URLs
    citecolor=ForestGreen,    % citations
}

\usepackage{amsthm} 
\theoremstyle{plain}     
% For definitions, examples (normal text)
\theoremstyle{definition}  

\newtheorem{step}{Step}

% Load cleveref after hyperref
\usepackage[nameinlink,capitalise]{cleveref}
\crefname{section}{Section}{Sec.}
\crefname{line}{line}{§§}
\crefname{figure}{Figure.}{Figure.}
\crefname{table}{Table.}{Table.}
\crefname{algorithm}{Algorithm}{§§}
\crefname{appendix}{Appx.}{§§}
\crefname{definition}{Def.}{§§}
\crefname{equation}{Eq.}{Eq.}
\crefname{step}{Step.}{§§}
\crefname{remark}{Remark}{§§}
\crefname{paragraph}{Para.}{§§}

\usepackage{tikz}
\newcommand*\circled[1]{\tikz[baseline=(char.base)]{
            \node[shape=circle,draw,inner sep=0.5pt] (char) {#1};}}

\usepackage{microtype}
\usepackage{graphicx}
\usepackage{booktabs} 
\usepackage{hyperref}

% Attempt to make hyperref and algorithmic work together better:

\usepackage[accepted]{icml2025}

% For theorems and such
\usepackage{amsmath}
\usepackage{amssymb}
\usepackage{mathtools}
\usepackage{amsthm}

%%%%%%%%%%%%%%%%%%%%%%%%%%%%%%%%
% THEOREMS
%%%%%%%%%%%%%%%%%%%%%%%%%%%%%%%%
\theoremstyle{plain}
\newtheorem{theorem}{Theorem}[section]
\newtheorem{proposition}[theorem]{Proposition}

\theoremstyle{definition}

\theoremstyle{remark}
\newtheorem{remark}[theorem]{Remark}

\usepackage[textsize=tiny]{todonotes}

\icmltitlerunning{TensorSLM: Energy-efficient Embedding Compression of Sub-billion Parameter Language Models on Low-end Devices}

\begin{document}

\twocolumn[
\icmltitle{\name: Energy-efficient Embedding Compression of Sub-billion Parameter Language Models on Low-end Devices}

\icmlsetsymbol{equal}{*}

\begin{icmlauthorlist}
\icmlauthor{Mingxue Xu}{sch}
\icmlauthor{Yao Lei Xu}{sch}
\icmlauthor{Danilo P. Mandic}{sch}

\end{icmlauthorlist}

\icmlaffiliation{sch}{Imperial College London, London, United Kingdom}
\icmlcorrespondingauthor{Mingxue Xu}{mx1221@ic.ac.uk}

\icmlkeywords{Machine Learning, ICML}

\vskip 0.3in
]

\printAffiliationsAndNotice{} 

\begin{abstract}

Small Language Models (SLMs, or on-device LMs)~\citep{slm} have significantly fewer parameters than Large Language Models (LLMs). They are typically deployed on low-end devices, like mobile phones~\citep{mobilellm} and single-board computers. Unlike LLMs, which rely on increasing model size for better generalisation, SLMs designed for edge applications are expected to have {\bf adaptivity} to the deployment environments and {\bf energy efficiency} given the device battery life constraints, which are not addressed in datacenter-deployed LLMs.
This paper addresses these two requirements by proposing a training-free token embedding compression approach using Tensor-Train Decomposition (TTD).
Each pre-trained token embedding vector is converted into a lower-dimensional Matrix Product State (MPS).
We comprehensively evaluate the extracted low-rank structures across compression ratio, language task performance, latency, and energy consumption on a typical low-end device, i.e. Raspberry Pi.
Taking the sub-billion parameter versions of GPT-2/Cerebres-GPT and OPT models as examples, our approach achieves a comparable language task performance to the original model with around $2.0\times$ embedding layer compression, while the energy consumption of a single query drops by half. 
\end{abstract}

\section{Introduction}\label{sec:intro}
\begin{figure*}[h]
    \centering
    \begin{subfigure}[b]{0.95\textwidth}
        \includegraphics[width=\linewidth]{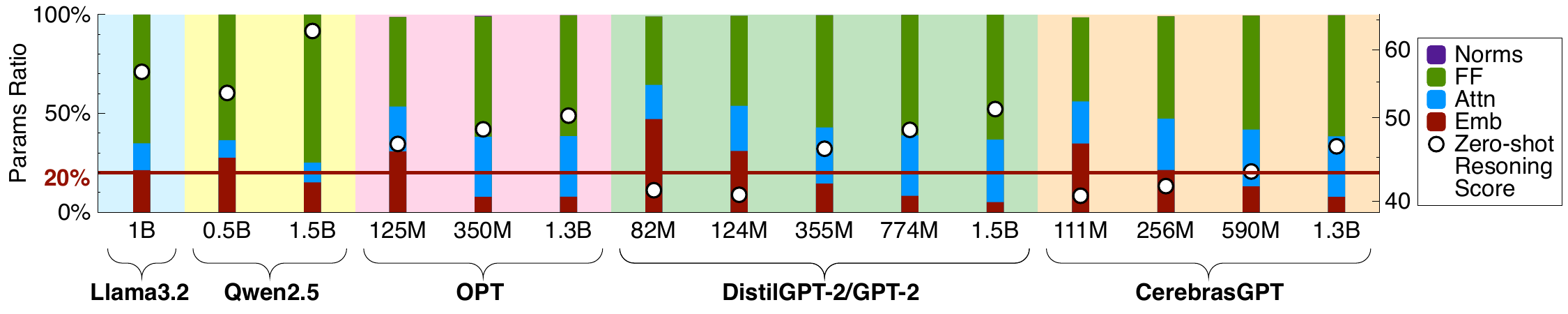}
    \caption{The parameter ratio of Norms (including layer norms), feed-forward layers (FF), attention layers (Attn), and embedding layers (Emb), and the average zero-shot reasoning score~\citep{hellaswag,arc,boolq,piqa} of several open-source model series. In a model series, smaller models have a higher token embedding layer ratio and lower feed-forward layer ratio, while the attention layer ratio is maintained.}\label{fig:layer-ratio}
    \end{subfigure}
    \hfill
    \begin{subfigure}[b]{\textwidth}
    \centering
    \includegraphics[width=\textwidth]{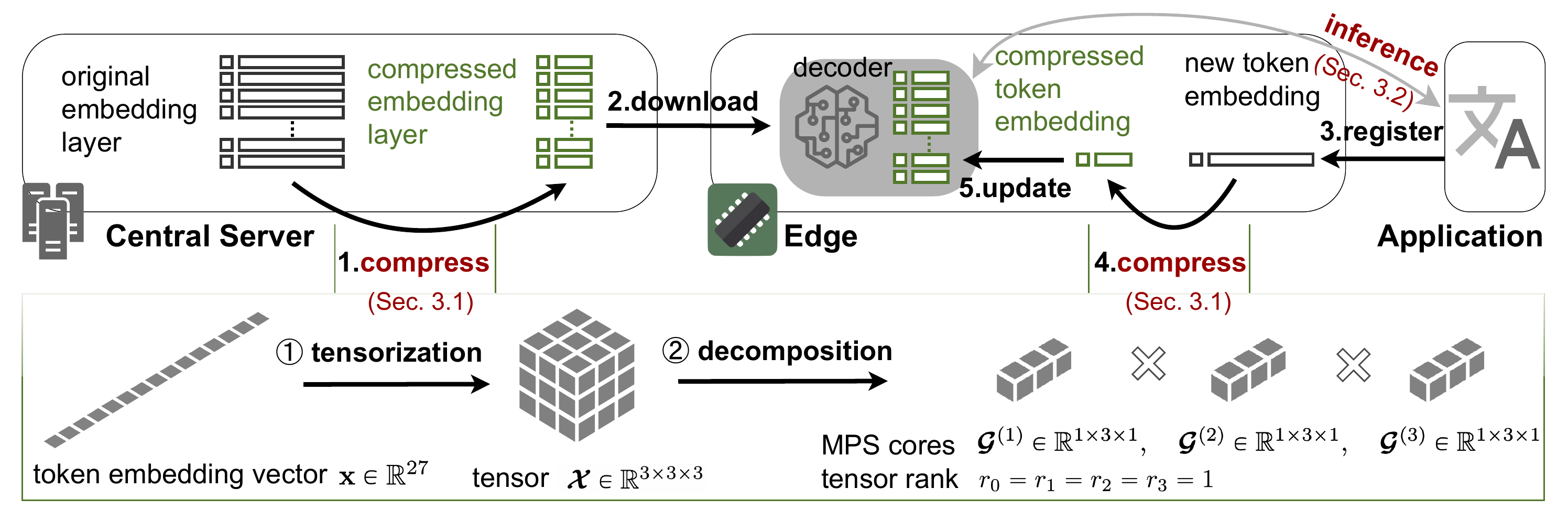}
    \caption{The workflow of SLM compression in edge computing scenario with our approach.}\label{fig:pipeline}   
    \vspace{-5pt}
    \end{subfigure}  
\caption{Typical SLM layer composition and the SLM application requirement of adaptability. }\label{fig:slm-ada}

\end{figure*}

Modelling complex language patterns and solving complex language tasks are two of the primary reasons that Large Language Models (LLMs) have attracted considerable attention in recent years. 
While the LLMs track thrives on increasing model sizes and tackling more difficult tasks, another track is considering putting such capable models on lower-end devices. These models are called Small Language Models (SLMs)~\citep{slm} or on-device language models~\citep{mobilellm,openelm,hfsmollm}.

SLMs may have less than one billion parameters~\citep{openelm,mobilellm,laskaridis2024melting}. Though such a size is already a few tenths or even hundreds of what common LLMs usually are, it can still be burdensome for some low-end devices.
As listed in~\citep[Fig. 2]{mobilellm}, some prevalent mobile devices (e.g. iPhone 14 and iPhone 15) only have 6GB DRAM. 
For some SLMs like Gemma2-2B, running the uncompressed version causes a system crash on Raspberry Pi-5 with 8GB DRAM.

Compared with LLMs, SLMs on low-end devices have different layer compositions of the model and different on-board operations due to the absence of server-level GPUs. As shown in~\cref{fig:layer-ratio}, around half of the investigated open-source models have more than 20\% of the parameters attributed to token embedding layers, which is consistent with the previous findings, i.e.~\citep[Section 2.2.3]{mobilellm}. Additionally, since no server-level GPU is on board to support massive parallel operations for matrix multiplication, block-wise approaches that rely on parallelism~\citep{Dao2022MonarchES,qiu2024compute} are not suitable for low-end deployment scenarios.

To this end, this paper proposes \name, a tensor-based approach to compress SLMs for low-end devices (i.e. Raspberry Pi without GPU). Together with matrix-based low-rank approaches~\citep{chen2018groupreduce,hrinchuk-etal-2020-tensorized,lioutas2020improving,acharya2019online,chen2021drone,hsu2022language,Dao2022MonarchES,qiu2024compute}, this kind of approach forms a broader field named low-rank factorization. The comparison of these works regarding methodologies (e.g. matrix/tensor, with/without training) and applications (e.g. high-end/low-end devices, large/small models) are clarified in~\cref{tab:related}. 

Compared with two-dimensional matrices or their finer-grained block-wise forms~\citep{chen2018groupreduce, Dao2022MonarchES}, higher-order tensors provide more diverse representation alternatives through their inter-order information, which is more suitable for small-size models to model complex patterns. This superiority is more pronounced when no fine-tuning data is available to adjust model parameters for specific deployment environments.

The contributions of this paper are summarised as follows:
\vspace{-5px}
\begin{enumerate}[leftmargin=*]\setlength\itemsep{-0.5em}
    \item We systematically analyse LLMs on high-end GPU servers and SLMs on low-end edge devices to address the two unique requirements of SLM compression: {\it adaptability} to specific deployment environments and {\it energy efficiency} for better user experience.  
    \item To our knowledge, we are the first to \ul{compress SLMs for low-end device use cases using low-rank factorization}. We adjust Tensor-Train Decomposition for non-parallel operations in the forward passes, where block-wise approaches~\citep{Dao2022MonarchES,qiu2024compute} are incompetent. 
    \item We gave the measured latency and estimated energy consumption of SLMs on the typical low-end device, Raspberry Pi 5
    , finding that our approach reduces half of the inference energy with negligible latency increase. 
    \item We evaluated both simple and complex language tasks. We found that our tensor-based approach is better at unprompted and unconstrained question answering than the matrix-based SVD approach, and herein sheds light on selecting appropriate algebraic structures for language model compression according to the specific tasks.  
\end{enumerate}

\section{Unique Requirements of SLM Applications}\label{sec:slms}

This section clarifies the main application differences between LLMs and SLMs, which will then guide the design of SLMs compression on low-end devices.

\subsection{Adaptability}\label{sec:flex}

Unlike the current LLM applications, which are mostly running on high-end GPU servers (e.g. in the data centres with numerous NVIDIA A100), SLMs are mainly for edge (or mobile) applications that require adapting to the environment with limited resources on lower-end devices. 
A common approach to adapting to the dynamic environment is updating the vocabulary according to the changes in input text distribution~\cite{chen2018groupreduce}. The reasons for this distribution change vary from case to case. For example, new user registration, or the frequently used tokens update with the users' changing daily lives. 

To cope with the ever-changing input tokens and vocabulary, a straightforward strategy is to build a could-edge system, as shown in~\cref{fig:pipeline}, which is similar to the workflows in the field of edge computing, e.g.~\citep[Fig.1]{laskaridis2024melting}. There are two kinds of devices in this workflow: 1) the central server, which is possibly a server in public or private cloud services, or a higher-end personal computer, and 2) the low-end edge device. In this paper, we only talk about a typical edge device - Raspberry Pi. 
Over a fairly long period (e.g. months or years), the central server only communicates with the edge device once to provide a brand-new pre-trained language model.
Afterwards, the edge device should update the vocabulary on board according to the changes in the environment.

A detailed explanation of~\cref{fig:pipeline} is as follows:

\begin{step}
    The central server compresses the whole token embedding matrices on the token embedding level, according to~\cref{ttcp_alg:ttsvd}. 
\end{step}

\begin{step}
    The compressed vocabulary and other parts of the language model (e.g. the decoder) are downloaded and then deployed on a low-end device. 
\end{step}

\begin{step}
    During the application runs, the vocabulary updates for two cases:
    \vspace{-10px}
    \begin{enumerate}\setlength\itemsep{-0.4em}
        \item a new token is required according to the actual application requirements, it will be registered by the service on the edge device. Jump to {\bf Step 4}.
        \item an old token is required to be removed (e.g. it has not been used for a long time), the edge device simply deletes the corresponding token embedding vector. Meanwhile, the application deregisters this token. 
    \end{enumerate}
\end{step}

\begin{step}
    The low-end device compresses the added token embedding vector as described in~\cref{ttcp_alg:ttsvd}.  
\end{step}

\begin{step}
    The current vocabulary of the language model. The compression process of a single token embedding follows a pipeline of \circled{1} tensorization and \circled{2} decomposition.
\end{step}

\subsection{Energy Efficiency}\label{sec:low-energy}

From the workload of the high-end GPU servers (e.g. those equipped with NVIDIA A100) and low-end edge devices (e.g. Raspberry Pi 5) described in~\cref{sec:flex}, we know that the edge device only takes charge of light-weight essential tasks, since it has strict limitations in computation, memory and communication. Furthermore, since battery life directly impacts the user experience, energy consumption is also a significant concern. 

\begin{table}[t]\scriptsize
    \centering
    \caption{Approximate energy consumption of different operations (1nJ=1000pJ). For servers, communication with the wired network (e.g. ethernet or optical fibre) is preferred; for edge devices, it is preferred to use wireless networks (e.g. Wi-Fi or cellular network).}\label{tab:energy}
    
    \begin{tabular}{c|l|c|c}
    \toprule
    \multicolumn{2}{c|}{\textbf{Energy Consumption}} & \begin{tabular}[c]{@{}c@{}}Raspberry Pi 5\\(Cortex-A76 CPU)\end{tabular} & \begin{tabular}[c]{@{}c@{}}GPU server\\(A100 GPU)\end{tabular} \\\midrule
    \multirow{2}{*}{\begin{tabular}{c}
         \textbf{Computation}  \\
         (pJ/\texttt{float32})
    \end{tabular}} &\textbf{Add}  & 1.0-2.5 & 5-12 \\\cline{2-4}
    &\textbf{Mult}  & 1.2-3 & 6-15 \\\midrule
     \multicolumn{2}{c|}{\textbf{Memory} (pJ/\texttt{float32})} & 70-260 & 100-450 \\\midrule
    \multirow{2}{*}{\begin{tabular}{c}
         \textbf{Communication}  \\
         (nJ/\texttt{float32})
    \end{tabular}} &\textbf{Wired}  & \multicolumn{2}{c}{50-350} \\\cline{2-4}
    &\textbf{Wireless}  & \multicolumn{2}{c}{400-6000} \\
    \bottomrule
    
    \end{tabular}

\end{table}

The actual energy consumption of a device depends on various factors, like the semiconductor temperature, system workload, operating environment, etc. Thus, it is hard to precisely calculate the exact energy consumption of an algorithm on a certain hardware. However, we can still estimate the range of energy consumption in the system as~\cref{tab:energy}, where we can have the following remarks:

\begin{remark}\label{re:expensive}
Memory operations are more ``expensive'' than computation in terms of energy.
\end{remark}
\begin{remark}\label{re:avoid}
Non-essential communication should be avoided for energy concerns.
\end{remark}
The workflow in~\cref{fig:pipeline} has already satisfied~\cref{re:avoid}. For~\cref{re:expensive}, if real-time is {\it not} the most important concern in the edge application, we ``exchange'' memory with computation for longer battery life. Further discussion and evaluation around these are in~\cref{sec:med-1,sec:exp}.

\section{Preliminaries}

This section gives the essential concepts related to tensor, tensor operations and Tensor-Train Decomposition.

\textbf{Order-N Tensor.} An order-$N$ real-valued tensor, $\mathcal{A}$, is a high-dimensional matrix (or multi-way array), denoted by $\mathcal{A}\in\mathbb{R}^{I_1\times\dots\times I_N}$, where $N$ is the order of the tensor (i.e., number of its modes), and $I_k$ ($1 \leq k \leq N$) is the size (i.e., the dimension) of its $k$-{th} mode. In this sense, matrices (denoted as $\mathbf{A}\in\mathbb{R}^{I_1\times I_2}$) can be seen as order-$2$ tensors ($N=2$), vectors (denoted as ${\bf a}\in\mathbb{R}^{I}$) can be seen as order-$1$ tensors ($N=1$), and scalars (denoted as $a\in\mathbb{R}$) are order-$0$ tensors ($N=0$).

\textbf{Tensor-Train Decomposition (TTD).} 
The most common Tensor-Train Decomposition~\citep{oseledets2011tensor} formats a tensor into a Matrix Product State (MPS) form, which applies the Tensor-Train Singular Value Decomposition (TT-SVD) algorithm to an order-$N$ tensor, $\mathcal{X} \in \mathbb{R}^{I_1 \times I_2 \times \cdots \times I_N}$. This results in $N$ smaller $2$-nd or $3$-rd order tensors, $\mathcal{G}^{(k)} \in \mathbb{R}^{ r_{k-1} \times  I_k \times r_k }$ for $k=1, \dots, N$, such that 
\begin{equation}\label{ttcp_eq:tt_def_contract}
    \mathcal{X} \approx \mathcal{G}^{(1)} \times_2^1 \mathcal{G}^{(2)} \times_3^1 \mathcal{G}^{(3)} \times_3^1 \cdots \times_3^1 \mathcal{G}^{(N)}.
\end{equation}
Tensor $\mathcal{G}^{(1)}, \ldots, \mathcal{G}^{(N)}$ are referred to as the {tensor cores}, while the set $\{ r_0, r_1, \ldots, r_{N} \}$ represents the  {TT-rank} of the TT decomposition ($r_0=r_N=1$).

\begin{algorithm}\small
    \SetKwInOut{Input}{Input}
    \SetKwInOut{Output}{Output}
    \SetKwInOut{Initialize}{Initialize}
\caption{\small\texttt{TT\_SVD}\citep{oseledets2011tensor} for a Single Token Embedding Compression }
        \label{ttcp_alg:ttsvd}
        
        \Input{1. $d$-dimensional token embedding vector ${\bf x}\in \mathbb{R}^{d}$, approximation accuracy $\epsilon$;\\
             2. Tensor dimension $\{I_1, I_2, \ldots,I_N\}$ and TT ranks\\ $\{r_0,r_1,\ldots,r_N\}$.
        }
    
        \Output{
            TT cores $\mathcal{G}^{(1)}, \dots, \mathcal{G}^{(N)}$.
        }

        \Initialize{Tensor $\mathcal{X}\leftarrow \texttt{reshape}({\bf x}, [I_1, I_2, \ldots,I_N])$, \\temporary matrix \\ \qquad\qquad$\textbf{Z} \leftarrow \texttt{reshape}(\mathcal{X},[r_0I_1, \prod_{j=2}^{N} I_j])$, \\truncation parameter $\delta = \frac{\epsilon}{\sqrt{N-1}} \Vert\mathcal{X}\Vert_F$.}

        \For{$k=1$ to $N-1$}{

            $\mathbf{U}, \mathbf{S}, \mathbf{V}, \mathbf{E} \leftarrow \texttt{truncSVD}(\textbf{Z},\delta,r_k)$ \label{line:svd}\\
            \qquad\tcp{\small s.t. $\mathbf{U} \in \mathbb{R}^{r_{k-1}I_k \times r_k}$, ${\Vert \mathbf{E} \Vert_F} \leq \delta$} 
    
            $\mathcal{G}^{(k)} \leftarrow$
                \texttt{reshape} {$\left( \mathbf{U}, [ r_{k-1}, I_k, r_k   ] \right)$} \\
                \qquad\tcp{\small get $k$th TT core}\label{line:core}
    
            $\mathbf{Z} \leftarrow$
                \texttt{reshape} {$\left( \mathbf{SV}^T, [r_k I_{k+1}, \prod_{j=k+2}^{N} I_j]) \right)$}\\
                \qquad\tcp{\small $\mathbf{SV}^T \in \mathbb{R}^{\prod^{N}_{i=k+2}I_i}$}\label{line:iterate}
        }
    
        $\mathcal{G}^{(N)} \leftarrow \mathbf{Z}$
    
        \Return{
             $\mathcal{G}^{(1)}, \mathcal{G}^{(2)}, \dots, \mathcal{G}^{(N)}$
        } 
    \end{algorithm}

\section{Methodology}\label{sec:methodology}

This section clarifies the technical cornerstones of our approach. A practical pipeline of our approach is depicted in~\cref{fig:pipeline}. The whole vocabulary is processed on higher-end servers, while inference and vocabulary updates happen on lower-end edge devices. 
\subsection{Individual Embedding Vector Compression}\label{sec:med-1}

For the compression of the embedding matrix, rather than decomposing the whole embedding weight matrix, we propose to decompose each embedding vector. The lower half of~\cref{fig:pipeline} is a simplified illustration of such a process, with a detailed description in \cref{ttcp_alg:ttsvd}. 

\textbf{Tensorization.} Each token embedding ${\bf x} \in \mathbb{R}^d$ is reshaped (or folded and tensorized into an order-$N$ tensor.
Denote $\texttt{reshape}(\cdot)$ as the reshape function, $\mathcal{X}=\texttt{reshape}({\bf x}, \{I_1, I_2, \ldots, I_N\})$ and $\mathcal{X} \in \mathbb{R}^{I_1 \times \cdots \times I_N}$ such that $d = \prod_{k=1}^{N} I_k$. In the example in \cref{fig:pipeline}, the token embedding vector ${\bf x}$ is a $27$-dimensional vector, $d=27$. In this way, vector ${\bf x}$ is reshaped into an order-$3$ ($N=3$) tensor $\mathcal{X}$, with tensor size for each mode $I_1=I_2=I_3=3$. 

\textbf{Tensor Decomposition.} Tensor $\mathcal{X}$ is then decomposed and stored in a Matrix Product State (MPS) form as $\mathcal{X} \approx \mathcal{G}^{(1)} \times_3^1 \cdots \times_3^1 \mathcal{G}^{(N)}$, with hyperparameters as TT ranks $r_0, r_1, \ldots, r_N$. For the case in \cref{fig:pipeline}, the MPS cores are $\mathcal{G}^{(1)}$, $\mathcal{G}^{(2)}$, $\mathcal{G}^{(3)}$, with TT ranks $r_0=r_1=r_2=r_3=1$.
In other words, instead of storing the entire token embedding vector $\textbf{x} \in \mathbb{R}^{d}$, we store the corresponding MPS cores, $\mathcal{G}^{(k)} \in \mathbb{R}^{r_{k-1} \times I_k \times r_k}$, for $k=1,\ldots,N$. The parameter count of the MPS cores $\{\mathcal{G}^{(k)}\}$ is $\sum^{N}_{k=1} \vert \mathcal{G}^{(k)} \vert = \sum^{N}_{k=1} r_{k-1}I_k r_{k}$, where $\vert \cdot \vert$ represents the parameter count. 

A more detailed explanation of individual token embedding compression is given in \cref{ttcp_alg:ttsvd}, where $\Vert \cdot \Vert_{F}$ denotes the Frobenius norm. Although the embedding vector is reshaped into a tensor, the decomposition for each mode of this tensor is still based on the matrix-level SVD (\cref{line:svd}). Then the complexity of $\texttt{TT\_SVD}$ can be derived from SVD and its variants, such as truncated SVD~\citep{oseledets2011tensor}. Given the vocabulary size $V$, the original parameters of the embedding layers are compressed from $Vd$ to $V\sum_{k=1}^{N} r_{k-1}I_k r_{k}$, and the compression ratio can be obtained via $\eta_{\texttt{TTD}} =\frac{d}{\sum_{k=1}^{N} r_{k-1}I_k r_{k}} -1$. The computation and memory complexities for all the above processes are summarized in \cref{tab:complexity}.

\textbf{Energy Consumption Analysis.} Recall in~\cref{sec:low-energy} we have~\cref{re:expensive} to guide the choice between memory and computation for the same functionalities from the perspective of energy cost. Based on~\cref{re:expensive} and~\cref{tab:complexity}, we can initially give the estimated energy costs when the SLM processes an input token (only before the decoder), which is similar with~\citep{power-measure}. Assuming in the same operating environment and other conditions (e.g. temperature), the memory energy cost of each \texttt{float32} is $\nu$, and the computation energy cost of each \texttt{float32} is $\tau$, all the model weights are represented in \texttt{float32}. 

When inputting a text of length $l$, denote original model energy cost regarding memory as $\mathcal{E}_{\nu}$, model energy cost regarding computation is $\mathcal{E}_{\tau}$, 
\vspace{-5pt}
\begin{equation}\label{eq:e1}
    \mathcal{E}_{\nu} = \nu (dV + ld), \quad \mathcal{E}_{\tau} = 0,
\end{equation}
and after compression, the energy costs are 
\vspace{-5pt}
\begin{equation}\label{eq:e2}
    \mathcal{E}^{'}_{\nu} = \nu (VNIr^2 + lNIr^2+ld), \quad \mathcal{E}^{'}_{\tau} = \tau NIr^{2}.
\end{equation}
Denote the SVD rank $k$, the energy cost after compressing with matrix-based SVD is
\vspace{-5pt}
\begin{align}\label{eq:e3}
    \mathcal{E}^{''}_{\nu} &= \nu \left [k(V + 2d + l +1) +ld \right ], \\
    \mathcal{E}^{''}_{\tau} &= \tau ( 2ldk-ld+kd).    
\end{align}
Therefore, we have the ratio of inference energy $\omega$, between the compressed language models and the uncompressed models. Denote $\omega_{\texttt{TT}} =\frac{\mathcal{E}^{'}_{\nu} + \mathcal{E}^{'}_{\tau}}{\mathcal{E}_{\nu} + \mathcal{E}_{\tau}}$ as the ratio with \name, and $\omega_{\texttt{SVD}} =\frac{\mathcal{E}^{''}_{\nu} + \mathcal{E}^{''}_{\tau}}{\mathcal{E}_{\nu} + \mathcal{E}_{\tau}}$ as the ratio with SVD. We will give the estimated values of $\omega_{\texttt{TT}}$ and $\omega_{\texttt{SVD}}$ in~\cref{sec:exp-res} according to the hyperparameters of the investigated open-source SLMs.

\begin{table}[t]\scriptsize
\caption{Computation and memory complexity during the compression (\cref{sec:med-1}) and inference(\cref{sec:med-2}) of \name. $\mathcal{M}_{\texttt{trans}}$ is the transformer module, $V$ denotes the vocabulary size, $d$ is the original token embedding dimension, and $l$ is the token number of the input text. For simplicity, the dimensions for each mode of the tensor and TT rank are represented as $I$ and $r$.}\label{tab:complexity}
\centering
\begin{tabular}{l|l}
\toprule
 \multicolumn{2}{c}{{\bf Memory}} \\\midrule
Original Embedding Layers & $\mathcal{O}(Vd)$  \\\cline{1-2}
Compressed Embedding Layers & $\mathcal{O}(VNIr^2)$  \\\hline
Compressed Encoded Texts & $\mathcal{O}(lNIr^2)$ \\\cline{1-2}
Intermediate input to $\mathcal{M}_{\texttt{trans}}$ & $\mathcal{O}(ld)$  \\
\bottomrule
\toprule
\multicolumn{2}{c}{{\bf Computation}} \\\midrule
$\texttt{TT-SVD}$ for single token embedding & $\mathcal{O}(NIr^3)$ \\\hline
Reconstruction of single token embedding & $\mathcal{O}(NIr^2)$\\
\bottomrule
\end{tabular}
\end{table}
\begin{figure*}[t]
     \centering
     \begin{subfigure}[b]{0.49\textwidth}
         \centering
         \includegraphics[width=\textwidth]{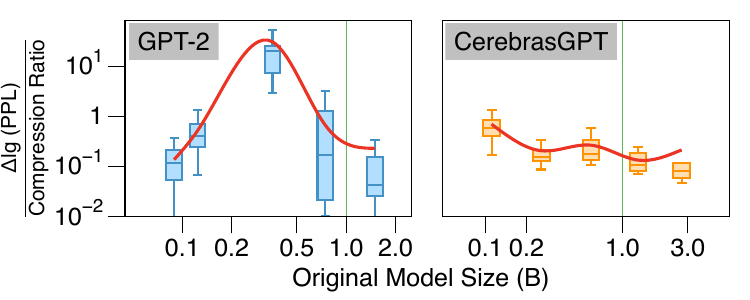}
         \caption{Perplexity-compression trade-off by model size. }
         \label{fig:points}
     \end{subfigure}
     \hfill
    \begin{subfigure}[b]{0.49\textwidth}
         \centering
         \includegraphics[width=\textwidth]{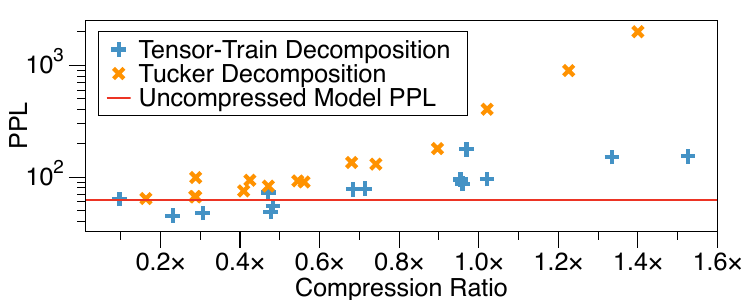}
         \caption{Comparison of different low-rank approaches.}
         \label{fig:tucker}
     \end{subfigure}

         \hfill
    \begin{subfigure}[b]{0.245\textwidth}
         \centering
         \includegraphics[width=\textwidth]{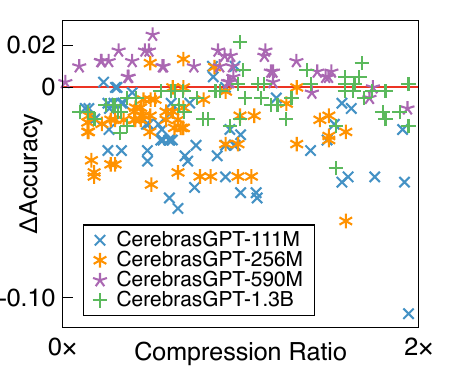}
         \caption{Classification Accuracy}
         \label{fig:cls-acc}
     \end{subfigure}
          \hfill
     \begin{subfigure}[b]{0.245\textwidth}
         \centering
         \includegraphics[width=\textwidth]{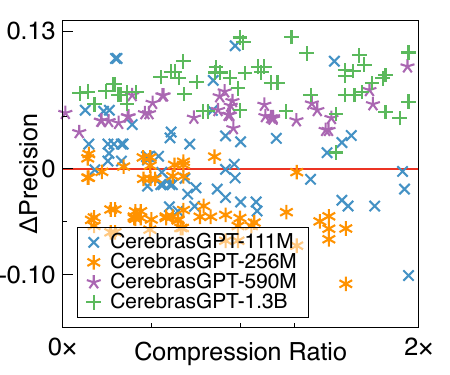}
         \caption{Classification Precision}
         \label{fig:cls-prec}
     \end{subfigure}
     \hfill
          \begin{subfigure}[b]{0.245\textwidth}
         \centering
         \includegraphics[width=\textwidth]{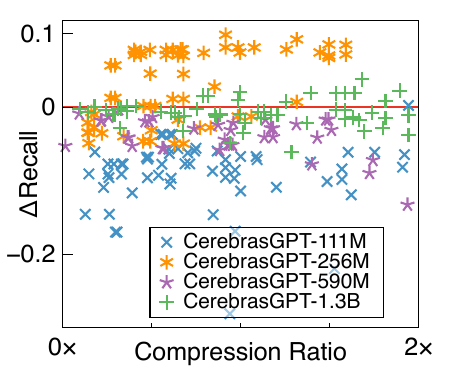}
         \caption{Classification Recall}
         \label{fig:cls-recall}
     \end{subfigure}
     \hfill
    \begin{subfigure}[b]{0.245\textwidth}
         \centering
         \includegraphics[width=\textwidth]{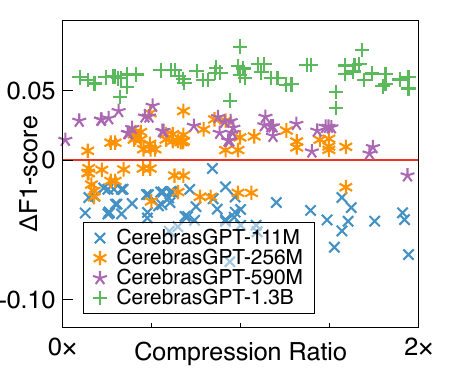}
         \caption{Classification F1-Score}
         \label{fig:cls-f1}
     \end{subfigure}
     \hfill
      \begin{subfigure}[b]{0.49\textwidth}
         \centering
         \includegraphics[width=\textwidth]{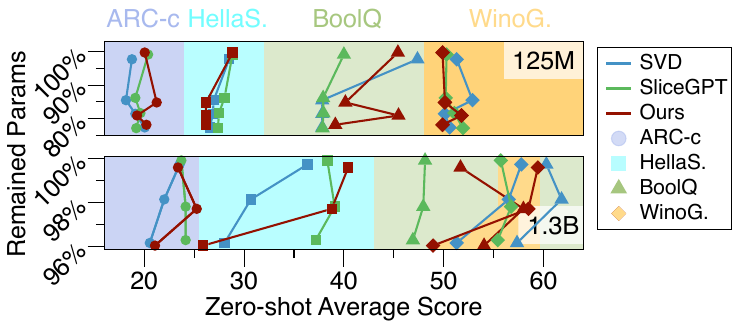}
         \caption{Zero-shot scores of OPT models (125M and 1.3B).}
         \label{fig:zero-shot}
             \vspace{-5pt}
     \end{subfigure}
    \hfill
      \begin{subfigure}[b]{0.49\textwidth}
         \centering
         \includegraphics[width=\textwidth]{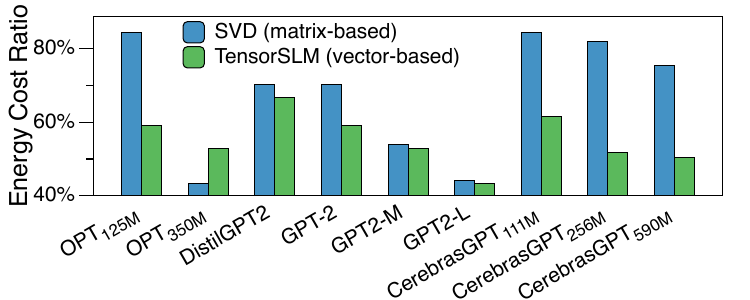}
         \caption{Inference energy costs of different models.}
         \label{fig:energy}
             \vspace{-5pt}
     \end{subfigure}

    \caption{Experimental results. (a): Perplexity-compression trade-off across different model sizes. This trade-off is measured by the ratio between perplexity and compression ratio of embedding layers; lower ratio values indicate better trade-offs. (b): Perplexity of the compressed models with different tensor decomposition approaches. (c)-(f): Task performance on sentiment classification with increasing compression ratio. Higher values indicate better classification performance.  (g): Zero-shot reasoning scores of OPT series models on four different tasks. Our approach demonstrates competitive performance. (h): Energy cost ratio of compressed to uncompressed models, where 100\% represents original energy consumption. Our approach overall outperforms the SVD-based approach.}
    \label{fig:lnppl}
    % \vspace{-15pt}
\end{figure*}

\subsection{Language Model Inference Process with the Compressed Embeddings}\label{sec:med-2}

The original inference process with embedding vectors is as follows: when the encoded texts (separated as tokens) are forwarded to the embedding layer, the embedding layer outputs the embedding vectors according to the input tokens; the embedding layer here acts like a look-up table. The embedding vectors are then forwarded to the hidden layers of the transformer, whose size is the same as the dimension of the embedding vectors. Thus, if there is no internal change in the hidden layers, the dimension of the embedding vectors should compile with the dimension of the hidden layers. The compressed embeddings should be reconstructed to the original dimension to enable the forwarding process. This inference happens at the application phase shown in the upper right of~\cref{fig:pipeline}.

Thus just before forwarding embedding vectors to the hidden layers, the memory usage increases from $l\sum_{k=1}^{N} r_{k-1}I_k r_{k}$ to $ld$. However, given that the vocabulary size $V$ is normally much larger than the input token number $l$, that means $V\gg l$. Thus our approach can still significantly reduce the memory usage if the embedding layer takes a significant part of the whole model parameters. The reconstruction process follows the tensor contraction in \cref{ttcp_eq:contraction}, turning the TT cores $\{\mathcal{G}^{(k)}\}$ into a $N$-order tensor $\mathcal{X}$ according to \cref{ttcp_eq:tt_def_contract}, and then vectorizing $\mathcal{X}$ into a full-size embedding vector according to \cref{para:vec}.

\section{Experimental Evaluation}\label{sec:exp-res}

Our comprehensive experimental evaluation covers compression ratio, language task performance changes, runtime (flops and latency), and energy consumption.

\subsection{Changes of Language Task Performance}\label{sec:lang-task}
\paragraph{Perplexity-compression Trade-off.} In most cases, the shrinkage of model size leads to a drop in the language task performance (though there are exceptions like the accuracy improvement of CerebrasGPT-590M in~\cref{fig:cls-acc}). There should be approaches to measure such a trade-off, with the benefits of a more affordable model size, and how much language task performance has been sacrificed. Here we gave a simple approach for the task evaluated with perplexity, $\frac{\Delta  \lg{\texttt{PPL}(S,\mathcal{M})}}{\eta_{\texttt{emb}}}$, with the measurements on GPT-2 and CerebrasGPT shown in~\cref{fig:points}. We found that larger model sizes achieve better trade-offs, with CerebrasGPT showing a smoother trend compared to GPT-2.

\paragraph{Language Modelling.} 

Due to the combination of tensor size and TT ranks exponentially exploding, we could not test all the possible combinations. However, we can still observe that independent of the tensor orders and the models used for the compression, significant language modelling performance loss tends to appear when the compression ratio exceeds $2.0\times$. We further compared our proposed approach with the Tucker decomposition in~\cref{fig:tucker} with the same tensorization strategy in~\cref{sec:med-1}, and found our adopted Tensor-Train Decomposition outperforms the Tucker Decomposition in perplexity. 

\paragraph{Sentiment Classification.} The results of the sentiment classification task are shown in~\cref{fig:cls-acc,fig:cls-prec,fig:cls-recall,fig:cls-f1}, also indicate that the robustness of larger-scale models (Cerebras-590M and Cerebras-1.3B) is better than that of the smaller models (Cerebras-111M and Cerebras-256M), similar to the trend in language modelling tasks mentioned above. The compressed larger-scale models tend to outperform the original model in precision and F1-score, indicating that our compression improves the ability of the larger models to recognise the positive texts. In contrast, the smaller models tend to have worse performance when the compression ratio increases.

\paragraph{Zero-shot Reasoning.} Since SLMs are incapable of the tasks that are too complex, we only evaluate the relatively simple reasoning tasks (e.g. those that do not involve multi-hop questioning, mathematics or multilingual), and the results are shown in in~\cref{fig:zero-shot}. The bold numbers are the cases that outperform the uncompressed models, or the best in all the compressed cases. 

Our approach has a higher chance of achieving better average reasoning task scores than the SVD-based approach, which implies that our tensors are better at extracting implicit representations in small size models than matrices. Moreover, in our evaluation, our approach generally has higher scores than the SVD-based approach in ARC-challenge and BoolQ. Both of these datasets are more unprompted and unconstrained compared to the other evaluated datasets. This fact implies that our approach may be better at these difficult, unconstrained reasoning tasks.

\subsection{Latency}

While TensorSLM significantly reduces the model parameters and even improves the language tasks performance, in practice it also introduced extra latencies - compression latency (\cref{sec:med-1}) and inference latency(\cref{sec:med-2}). 

In our experimental evaluation, a typically induced latency for an input text was no more than $0.3$ seconds, which is acceptable for edge applications. 
Due to space constraints, the comprehensive results and detailed analysis of the on-device latency evaluation are provided in~\cref{sec:detailed-latency}.

\subsection{Energy Consumption}

The estimated inference energy costs are shown in~\cref{fig:energy}. The Y-axis indicates the ratio between the inference energy costs of the compressed model and that of the uncompressed model; the lower, the better energy saving. 
For each language model, we select the compression case that has a similar language task performance according to~\cref{sec:lang-task}.

We can observe that our approach is mostly better than the SVD-based approach. Furthermore, \name supports adaptivity in edge applications, while the SVD-based approach does not.

\section{Conclusion and Future Work} \label{sec:conclusion}
This paper addresses two unique requirements of Small Language Models (SLMs) deployed on low-end devices: {\it adaptivity} and {\it energy efficiency}. We propose a training-free approach to compress token embeddings using Tensor-Train Decomposition, enabling dynamic vocabulary adjustment and memory-computation trade-offs for extended battery life.
We evaluated our approach on GPT-2, CerebrasGPT, and OPT models across language modeling, classification, and zero-shot reasoning tasks. Systematic measurements on Raspberry Pi 5 show that our method reduces inference energy costs by half, with negligible performance degradation and minimal latency increase.

Future work includes extending tensorization to hidden layers for native compilation and developing accelerated tensor operations to optimize CPU arithmetic requirements.

\bibliography{ref}
\bibliographystyle{icml2025}

\newpage

\appendix
\onecolumn
\section{Notation}
\begin{table}[H]\small
    \centering
    \caption{Notation in this paper.}
    \label{tab:notation}
    \begin{tabular}{r|l}
    \toprule
     {\bf Symbol} & {\bf Meaning} \\
    \midrule
        $a$ & Scalar.\\
         ${\bf x}$& Vector. \\
         ${\bf A}$& Matrix. \\
         $\mathcal{X}$, $\mathcal{A}$, $\mathcal{B}$& Tensor. \\
         $N$ & Tensor order.\\
         $\mathcal{X}[i_1, \ldots, i_N]$ & The $(i_1, i_2,\ldots,i_N)$th entry of the tensor.\\
         $I, I_k$ & Tensor dimension, tensor dimension for the  $k$th mode.   \\
         $\mathcal{M}$ & Model module set. \\
         $\vert \mathcal{M}\vert$, $\vert \mathcal{G}\vert$, $\vert S\vert$ & Parameter count of the model module set $\mathcal{M}$, tensor $\mathcal{G}$ or cardinality of set $S$. \\
         $V$ & Vocabulary of the language model.\\
         $d$ & Token embedding dimension. \\
         $l$ & Input text length.\\
         $r$, $r_k$& TT rank, TT rank of the $k$th mode of the tensor.\\
         $\mathcal{G}^{(k)}$& TT(MPS) core of the $k$th mode of the tensor.\\
         $\times^{p}_{k}$ & Tensor contraction for the $p$th (formal tensor) and $k$th (latter tensor) mode. \\
         $\eta$ & Compression ratio of the entire model.\\
         $\eta_{\texttt{emb}}$ & Compression ratio of the   embedding layer. \\
         $\varphi$ & Parameter reduction ratio of the whole model. \\
         $\varphi_{\texttt{emb}}$ & Parameter reduction   ratio of the embedding layer. \\
         $\nu$ & Memory energy consumption per \texttt{float32} data. \\
         $\tau$ & Computation energy consumption per \texttt{float32} data. \\
         $\mathcal{E}_\nu$ & Estimated energy cost regarding memory. \\
         $\mathcal{E}_\tau$ & Estimated energy cost regarding computation. \\
         $\omega_{\texttt{TT}}$ & Estimated energy cost ratio between the compressed model with \name and uncompressed model.
        \\
        $\omega_{\texttt{SVD}}$ & Estimated energy cost ratio between the compressed model with SVD and the uncompressed model.\\ 
         \bottomrule
    \end{tabular}
    
\end{table}
\section{Preliminaries}\label{sec:preliminaries}

\subsection{Tensors and Tensor Operations}
This section gives brief mathematical preliminaries of tensor algebra, and basic knowledge in LLMs to facilitate the understanding of our proposed methodology in \cref{sec:methodology}.

\textbf{Order-N Tensor.} 
An order-$N$ real-valued tensor is a multi-dimensional array, denoted by a calligraphic font, e.g., $\mathcal{A}\in\mathbb{R}^{I_1\times\dots\times I_N}$, where $N$ is the order of the tensor (i.e., number of modes), and $I_n$ ($1 \leq n \leq N$) is the size (i.e., the dimension) of its $n$-{th} mode. Matrices (denoted by bold capital letters, e.g., $\mathbf{A}\in\mathbb{R}^{I_1\times I_2}$) can be seen as order-$2$ tensors ($N=2$), vectors (denoted by bold lower-case letters, e.g., $\mathbf{a}\in\mathbb{R}^{I}$) can be seen as order-1 tensors ($N=1$), and scalars (denoted by lower-case letters, e.g., $a\in\mathbb{R}$) are order-$0$ tensors ($N=0$).
    
\textbf{Tensor Entries.} 
The $(i_1, \ldots, i_N)$-th entry of an order-$N$ tensor is denoted by $a_{i_1, \cdots, i_N} \in \mathbb{R}$, where $i_n = 1, \ldots, I_n$ for $n=1,\ldots,N$. A tensor fiber is a vector of tensor entries obtained by fixing all but one index of the original tensor (e.g., $\mathbf{a}_{:, i_2, i_3, \ldots, i_N} \in \mathbb{R}^{I_1}$). Similarly, a tensor slice is a matrix of tensor entries obtained by fixing all but two indices of the original tensor (e.g., $\mathbf{A}_{:, :, i_3, i_4, \ldots, i_N} \in \mathbb{R}^{I_1 \times I_2}$).

\textbf{Tensorization.} 
A vector ${\bf a} = (a_1, a_2,\ldots,a_{I_1 I_2 \cdots I_N}) \in \mathbb{R}^{I_1 I_2 \cdots I_N}$, can be tensorized (or ``folded'', ``reshaped'') into an order-$N$ tensor $\mathcal{A} \in \mathbb{R}^{I_1 \times I_2 \times \cdots \times I_N}$, so that
\begin{equation}\label{ttcp_eq:folding}
    \mathcal{A}[i_1, i_2, \dots, i_N] = a_{1+\sum_{k=1}^{N}(i_k-1)\prod_{p=1}^{k-1}I_p},\quad\quad 1\leq i_k \leq I_k,
\end{equation}
where $\mathcal{A}[i_1, i_2, \dots, i_N]$ denotes the $(i_1, i_2, \dots, i_N)$-th entry of tensor $\mathcal{A}$.

\textbf{Vectorization.}\label{para:vec}
Given an order-$N$ tensor, $\mathcal{A}\in\mathbb{R}^{I_1\times\cdots\times I_N}$, its vectorization reshapes the high-dimensional matrix into a vector, $\texttt{vec} \left( \mathcal{A} \right) = \mathbf{\bf{a}} \in \mathbb{R}^{I_1 \cdots I_N}$. 

\textbf{Tensor Contraction.} 
The contraction of $\mathcal{A}\in\mathbb{R}^{I_1\times \dots \times I_N}$ and $\mathcal{B}\in\mathbb{R}^{J_1\times \dots \times J_M}$, over the $k$th and $p$th modes respectively, where $I_k=J_p$ is denoted as $\mathcal{A}\times^{p}_{k} \mathcal{B}$ and results in a tensor $\mathcal{C} \in \mathbb{R}^{I_1 \times \cdots \times I_{k-1} \times I_{k+1} \times \cdots \times I_N \times J_1 \times \cdots \times J_{p-1} \times J_{p+1}  \times \cdots \times J_M}$, with entries 
\begin{equation}\label{ttcp_eq:contraction}
\begin{aligned}
    &\mathcal{C}[i_1,\dots,i_{k-1}, i_{k+1}, \dots, i_N, j_1, \dots, j_{p-1}, j_{p+1}, \dots, j_M ] \\ = &\sum_{q=1}^{I_k} \Bigl( \mathcal{A}[i_1, \dots, i_{k-1}, q, i_{k+1}, \dots, i_N] \\
    &\qquad \cdot \mathcal{B}[j_1, \dots, j_{p-1}, q, j_{p+1}, \dots, j_M] \Bigl)
\end{aligned}
\end{equation}

\textbf{Matricization (Mode-n unfolding).} 
Mode-$n$ matricization of a tensor, $\texttt{mat}\left( \mathcal{A}, n \right) = \mathbf{A}_{\{n\}} \in \mathbb{R}^{I_n \times (I_1 \cdots I_{n-1} I_{n+1} \cdots I_N)}$, is a procedure of mapping the elements from a multidimensional array to a two-dimensional array (matrix). Conventionally, such procedure is associated with stacking mode-$n$ fibers (modal vectors) as column vectors of the resulting matrix. For instance, the mode-$1$ unfolding of $\mathcal{A} \in \mathbb{R}^{I_1 \times I_2 \times \cdots \times I_N}$ is represented as $\texttt{mat}\left( \mathcal{A}, 1 \right) = \mathbf{A}_{\{1\}} \in \mathbb{R}^{I_1 \times (I_2 \cdots I_N)}$, where the subscript, $\{1\}$, denotes the mode of matricization, and is given by
    \begin{equation}
        \mathbf{A}_{(1)}\bigg[i_1,\overline{i_2 i_3 \ldots i_N} \bigg] = \mathcal{A}[i_1,i_2,\ldots, i_N]
    \end{equation}
    Note that the overlined subscripts refer to linear indexing (or
    Little-Endian), given by:
    
    \begin{equation}\label{ttcp_eq:mode-n-unfold}
    \begin{aligned}
        \overline{i_1 i_2 \dots i_N}
            &= 1 + \sum_{n=1}^N \Bigg[ (i_n - 1) \prod_{n'=1}^{n-1}I_{n'} \Bigg] \\
            = 1 + i_1 & + (i_2 - 1)I_1 + \cdots + (i_n-1)I_1 \ldots I_{N-1}
    \end{aligned}
    \end{equation}

\subsection{Related Work in Detail}\label{sec:related}

Low-rank factorization can break the high-dimensional weight matrices into smaller matrices or tensors, so that the overall size of the model can be shrunk. According to the dimensions of the structure that the original weight matrices are broken into, these approaches can be divided into matrix-based and tensor-based.

\textbf{Matrix-based Approaches.}
A straightforward way to shrink the model size is to decompose weight matrices via singular value decomposition (SVD)~\citep{acharya2019online}, which can be further improved by the weighted approach considering the model performance afterwards~\citep{hsu2022language}, knowledge distillation~\citep{lioutas2020improving,mao2020ladabert} and pruning~\citep{mao2020ladabert}. There are also some block-wise decomposition approaches used in language model compression, like Kronecker Products~\citep{tahaei-etal-2022-kroneckerbert,edalati2022kronecker} and data-driven block-wise partitioning~\citep{chen2018groupreduce,chen2021drone}. 

\citep{Dao2022MonarchES,qiu2024compute} used the block-diagonal matrices to reduce the FLOPs in the linear layers computation, with the bonus of shrinking the model size. However, our paper focuses on reducing the parameters of embedding layers, and there is no monotonous relationship between the FLOPs (computation cost) and parameters (memory usage)~\citep{mcunet}. Also, their investigated matrix multiplication only occurs in feed-forward layers, thus their approaches do not fit the embedding layer compression. Moreover, block-diagonal matrices are optimised for GPUs for better parallelization. Our aim of minimizing the number of parameters, makes it optimized for lower-end edge devices rather than GPUs. Indeed, on Raspberry Pi 5, the additional forwarding latency due to compressed embeddings (0.330 - 0.364ms /token in~\cref{tab:decompose-latency}) is even faster than that on GPU (measured as 0.463ms /token in our setting), since there is no parallelization during this forwarding process.

\textbf{Tensor-based Approaches.} Despite some efforts to use tensor decomposition to compress the language model size, all come with an extra training process. The works in \citep{abronin2024tqcompressor} use Kronecker decomposition with row-column permutation during the GPT model fine-tuning process, while \citep{hrinchuk-etal-2020-tensorized} and \citep{chekalina-etal-2023-efficient} propose a tensor-train structured embedding layer and GPT model respectively, yet both train the new-structured model from scratch.

\section{Why not existing solutions?}\label{sec:whytt}
\begin{table*}[ht]\small
\centering
\caption{Comparison with our approach and the relevant research.}\label{tab:related}

\begin{tabular}{c|cc|c|cc|cc|cc}

\toprule
\multirow{2}{*}{\begin{tabular}{l} 
\textbf{Relevant}\\ \textbf{Study}
\end{tabular}}  & \multicolumn{2}{c|}{\textbf{Device}}   & \multirow{2}{*}{\textbf{Training ?}} & \multicolumn{2}{c|}{ \begin{tabular}{c}
\textbf{Algebra Structure}
\end{tabular}}  &   \multicolumn{2}{c|}{\textbf{Layer}}  &  \multicolumn{2}{c}{\textbf{Focused Size}}    \\ \cline{2-3}\cline{5-10}

 &   \rotatebox[origin=c]{0}{high-end}  &    \rotatebox[origin=c]{0}{low-end}    &   & \rotatebox[origin=c]{0}{matrix} &   \rotatebox[origin=c]{0}{tensor} & \rotatebox[origin=c]{0}{Emb} &  \rotatebox[origin=c]{0}{Linear} & \rotatebox[origin=c]{0}{large} &   \rotatebox[origin=c]{0}{small}
\\\midrule
\cite{chen2018groupreduce}  & $\surd$    &   & $\surd$ &   $\surd$ &   &   $\surd$ & && $\surd$  \\ \hline
\cite{hrinchuk-etal-2020-tensorized}   & $\surd$   &  &  &  & $\surd$  &  $\surd$ & && $\surd$   \\ \hline
\cite{lighttoken}  & $\surd$    &   & $\surd$ &   $\surd$ & &  $\surd$ & & & $\surd$ \\ \hline
\cite{dsvd} & &  $\surd$ &$\surd$ &    $\surd$ & &  $\surd$ & &  & $\surd$     \\ \hline
\cite{irvq}   &  \multicolumn{2}{c|}{-}         & $\surd$  &  $\surd$  &    & \multicolumn{4}{c}{-}                     \\ \hline
\cite{dcq} & $\surd$    &   & $\surd$ &   $\surd$ &   &   $\surd$ & && $\surd$  \\ \hline
\cite{asvd}  & $\surd$     &     & &  $\surd$   &     &  & $\surd$  & $\surd$ & \\ \hline
\cite{hsu2022language}  & $\surd$    &   & $\surd$ &  $\surd$ &    & &  $\surd$ & & $\surd$  \\ \hline
\cite{chekalina2023efficient} & $\surd$   &   & $\surd$  &     & $\surd$  &  & $\surd$ & $\surd$ \\ \hline
\cite{modegpt} & $\surd$    &   & &  $\surd$ &    & &  $\surd$ & $\surd$ &   \\ \hline
% slicegpt\cite{slicegpt}  & $\surd$  &   & \textcolor{red}{$\surd$} &                      &     \\ \hline
\cite{Dao2022MonarchES}   & $\surd$    &   & $\surd$ &    $\surd$    &        &  &  $\surd$     &   &             \\ \hline
\cite{qiu2024compute}    & $\surd$    &   & $\surd$ &    $\surd$    &    $\surd$     &  &  $\surd$     &   &               \\\hline
\cite{mobilellm}  & $\surd$    &   & $\surd$ &     \multicolumn{4}{c|}{-}       &  &  $\surd$        \\\hline
\name (Ours)   &   &  $\surd$     & & &$\surd$ &    $\surd$ &           & $\surd$ \\
\bottomrule

\end{tabular}

\end{table*}

The field of language model compression with low-rank factorization has been booming in recent years. The recent relevant works are summarized in~\cref{tab:related}.
We can observe that for the current existing works, some are specialized for embedding layers~\citep{chen2018groupreduce,hrinchuk-etal-2020-tensorized,lighttoken,dsvd,acharya2019online,irvq} while others are not~\citep{chekalina-etal-2023-efficient,chen2021drone,hsu2022language,Dao2022MonarchES,qiu2024compute}. However, all of these require an extra training process, such as fine-tuning, meta-learning~\citep{chen2018groupreduce,chen2021drone,hsu2022language,dsvd,irvq,Dao2022MonarchES,lighttoken,qiu2024compute} and training from scratch~\citep{hrinchuk-etal-2020-tensorized,chekalina-etal-2023-efficient}. 

There are two limitations to this extra training: 1) extra training involves additional computation and training data, which may be unavailable for low-end devices; 2) training the language model from scratch discards the valuable knowledge stored in the weights of the original models. However, we only focus on training-free low-end device applications. For a more detailed discussion of these relevant works, please refer to~\cref{sec:related}.

\section{Perplexity and Logarithmic Perplexity.}\label{appd:ppl} 

Perplexity is used as a performance evaluation metric of the language modelling task, which has the following form
\begin{equation}\label{eq:ppl}
    \texttt{PPL}(S,\mathcal{M}) = \left( \prod^{|S|}_{i=1} p_{\mathcal{M}}(x_i | x_1, x_2, \ldots, x_{i-1}) \right)^{-1}
\end{equation}
where $S$ is an ordered set (token sequence), consisting of a set of tokens $\{x_t\}$, $t=1,2,\ldots, |S|$, and $\mathcal{M}$ is the model block that contains all the modules of the language model we evaluate. 

Notice that the compression ratio \cref{def:cr} has a linear form, while perplexity \cref{eq:ppl} has an exponential form, so it is hard to combine them as a description of a model compression result, since when compression ratio $\eta$ linearly increases, the perplexity $\texttt{PPL}$ explodes exponentially. To this end, we use the following logarithmic form to describe the language modelling performance
\begin{equation}\label{eq:ppl-ln}
    \ln{\texttt{PPL}(S,\mathcal{M})} = - \sum ^{|S|}_{i=1} \ln p_{\mathcal{M}}(x_i | x_1, x_2, \ldots, x_{i-1})
\end{equation}

Now, the language modelling performance change before and after compression is given by  
\begin{align}\label{def:logppl}
    \Delta & \ln{\texttt{PPL}(S,\mathcal{M})} = \ln{\texttt{PPL}(S,\mathcal{M_{\texttt{cmpr}}})} - \ln{\texttt{PPL}(S,\mathcal{M}_0)} \\  & = \sum ^{|S|}_{i=1} \ln \frac{p_{\mathcal{M}_{0}}(x_i | x_1, x_2, \ldots, x_{i-1})}{p_{\mathcal{M}_{\texttt{cmpr}}}(x_i | x_1, x_2, \ldots, x_{i-1})},
\end{align}\small
observe that \cref{def:logppl} exhibits linearity, like \cref{def:cr}. 
\section{Proof of the Highest Compression Ratio in~\cref{tab:complexity}}\label{app:2-power}
\begin{proposition}\label{pro:2}
For an order-$N$ tensor whose dimension for each order are $I$, its TT-format yields the highest compression ratio when $I=2$ and TT rank $r=1$. 
\end{proposition}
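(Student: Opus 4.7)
The plan is to write the compression ratio in closed form under the uniform assumption $I_k = I$ and uniform interior rank $r_k = r$, then maximize it in two stages: first over $r$, and then over $I$ with $N$ determined by the constraint $d = I^N$.

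First I would specialize the denominator in $\eta_{\texttt{TTD}} = \frac{d}{\sum_{k=1}^N r_{k-1} I_k r_k} - 1$ to the uniform setting. With $r_0 = r_N = 1$ and $r_1 = \cdots = r_{N-1} = r$, the exact parameter count is $2Ir + (N-2)Ir^2$, which collapses to $NI$ when $r = 1$. Since the numerator $d = I^N$ is independent of $r$ and the denominator is strictly increasing in $r$ for $r \geq 1$, the compression ratio is maximized at $r = 1$. This already disposes of half of the claim and matches the approximate form $NIr^2$ used in \cref{tab:complexity}.

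With $r = 1$ fixed, substituting $N = \ln d / \ln I$ into $\eta = I^{N-1}/N - 1$ reduces the problem to maximizing $f(I) = (\ln I)/I$ over admissible $I$. Differentiating the real-valued extension gives $f'(I) = (1 - \ln I)/I^2$, which vanishes at $I = e$, so the unconstrained optimum sits strictly between $2$ and $3$. The integer constraint together with the divisibility requirement $I^N = d$ (needed so that the vector actually folds into a cubical tensor) then restricts the feasible $I$ to integer roots of $d$ that are $\geq 2$. In the standard regime where the embedding dimension $d$ is a power of two, this set is $\{2, 4, 8, \ldots\}$, and I would finish by checking monotonicity of $f$ on powers of two: $f(2) = (\ln 2)/2$ ties with $f(4)$ and strictly dominates $f(2^k)$ for every $k \geq 3$, leaving $I = 2$ as the optimizer (and the tensorization of smallest dimension).

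The main obstacle is the interplay between the continuous optimum at $I = e$ and the discrete divisibility constraint: a naive discrete argument would flag $I = 3$ as beating $I = 2$ whenever $d$ is a power of three, so the proposition silently relies on the convention that $d$ is chosen as a power of two, which is the case for every SLM analysed in \cref{sec:exp-res}. A secondary subtlety is justifying that the simplified parameter count $NIr^2$ yields the same optimizer as the exact $2Ir + (N-2)Ir^2$; this follows because the boundary correction $2Ir - 2Ir^2$ is independent of the scaling in $N$ and shrinks in relative terms as $N$ grows, so the argmin is unchanged.
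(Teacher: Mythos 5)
Your proposal is correct and follows the same skeleton as the paper's proof in \cref{app:2-power} --- specialize to uniform $I_k=I$ and uniform interior rank $r$, eliminate $N$ via $N=\log_I d$, and optimize a single-variable function --- but it is more careful at exactly the two points where the paper's argument is weakest. First, you dispose of the rank cleanly: the exact denominator $2Ir+(N-2)Ir^2$ is strictly increasing in $r$ while the numerator $d=I^N$ is independent of it, so $r=1$ is optimal. The paper never argues this; it treats $r$ as a fixed hyperparameter even though $r=1$ is part of the claimed statement. Second, and more substantively, your reduction to maximizing $f(I)=(\ln I)/I$ (the reciprocal of the paper's $h(I)\propto I/\ln I$) correctly locates the unconstrained optimum at $I=e$, and your observation that a naive integer argument favours $I=3$ is right: $f(3)=(\ln 3)/3\approx 0.366 > (\ln 2)/2\approx 0.347$, equivalently $3/\ln 3\approx 2.73 < 2/\ln 2 \approx 2.89$. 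The paper instead asserts that $h$ is ``monotonically increasing'' on $2\le I\le 13$, which is false precisely at the step from $I=2$ to $I=3$, so the paper's proof as written does not actually exclude $I=3$; your repair via the divisibility constraint $I^N=d$ is the genuine missing ingredient, not a pedantic aside. Your tie $f(2)=f(4)$ on powers of two, broken in favour of the smaller dimension, is also correct and goes unremarked in the paper.

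One factual slip worth fixing: $d$ is \emph{not} a power of two for several of the evaluated models ($768=2^8\cdot 3$, $1280=2^8\cdot 5$, $1088=2^6\cdot 17$; compare the non-uniform shapes such as $4\times 2\times 17\times 4\times 2$ reported in \cref{sec:detailed-latency}), and indeed none of these values of $d$ is a perfect integer power at all. The cleanest formulation of your constraint is therefore that $I$ ranges over the integer roots of $d$ (a set that is empty for such $d$, which is exactly why the paper's experiments depart from the uniform idealization), rather than an appeal to a power-of-two convention. This does not affect the logic of your two-stage optimization, only the framing of its feasible set.
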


\begin{proof}
Assume the tensor size $\left[I_1,\ldots,I_{N}\right]$ for the tensor $\mathcal{X}$ to achieve the highest compression rate, we next give the proof of this hyperparameter selection.

The compression ratio in~\cref{sec:med-1} can be represented as
\begin{align}
 \eta &= \frac{V\times d} 
 {\sum_{j=1}^{V}\sum_{n=1}^{N}(r_{n-1} \times I_n \times r_n)_j} \\
 &= \frac{V \times d}{I_{1}r_{1}+r_{1}I_{2}r_{2}+\cdots+r_{N-2}I_{N-1}r_{N-1}+r_{N-1}I_{N}} \\
 & = \frac{V\times d}{\sum_{k=1}^{\lfloor \frac{N+1}{2} \rfloor} r_{2k-1} \left( r_{2k-2}I_{2k-1}+I_{2k}r_{2k+1} \right)}
\end{align}

For the simplest case, assume $I_1=\cdots=I_{N}=I$ and $r_1=\cdots=r_N=r$. Given $d=\prod_{n=1}^{N} I_n=I^{N}$, we have $N=\log_{I}{D}$, and
\begin{equation}\label{eq:com-app}
    \eta = \frac{V\times d}{rI\left[2+(N-2)r\right]}=\frac{V\times d}{rI\left[2+(\log_{I} d-2)\right]}.
\end{equation}

In Equation~\cref{eq:com-app}, the numerator is a constant, and in the denominator, $R$ is a hyperparameter for the Tensor-Train Decomposition. 
Thus the objective function for the highest compression rate $\eta$ is
\begin{align}
    \min_{I, N} rI\left[2+(N-2)\right] \quad \quad \textbf{s.t.} \quad  
    &N=\log_{I}{d} \label{eq:com-obj} \\ 
    & I, N, r \in \mathbb{Z}^{+} \\
    & 2\leq I \leq N \leq \lfloor\log_2 d\rfloor  \label{eq:com-range}
\end{align}

Regarding \cref{eq:com-obj}, if eliminate $N$ then we have a function $h=rI\left[2+(\log_{I}{d}-2)\right]$. Regarding $d$ in \cref{eq:com-range}, the largest token embedding size of recent GPT-3~\citep{brown2020language} is 12,888. Thus, for the GPT series models no later than GPT-3, \cref{eq:com-obj} should be  $2\leq I \leq N \leq 13$. In this range, $h$ is a monotonically increasing function, where the minimum $h$ occurs at $I=2$. 

Therefore, for the simplest case, we have the best hyperparameter selection of $I_1=I_2=\cdots=I_N=2$, and $N=\lfloor\log_2 d\rfloor$.

\end{proof}

\section{Experimental Setup}\label{sec:exp}

\subsection{Models, Tasks and Dataset.} 
The sub-billion models we used are DistilGPT2~\citep{sanh2019distilbert}, GPT2, GPT2-{M/L}~\citep{radford2019language}, CerebrasGPT-{111M/256M/590M}~\citep{dey2023cerebras}, OPT-{125M. We also tested the models of slightly over a billion parameters for language task performance with GPT2-XL (1.5 billion parameters), CerebrasGPT-1.3B and OPT-1.3B for the boundary tests. 

Regarding the language tasks, we have two different level language tasks:
\begin{itemize}[leftmargin=2em, itemindent=0em]

    \item {\bf Simple Tasks}: language modelling and sentiment classification. For language modelling, the considered datasets are WikiText2, WikiText103~\citep{merity2022pointer} and 1BW~\citep{chelba2014one}. For sentiment classification, the considered dataset is IMDB~\citep{maas-EtAl:2011:ACL-HLT2011}.
    \item {\bf Complex Tasks}: zero-shot common sense reasoning tasks. The common sense reasoning datasets include ARC-challenge~\cite{arc}, BoolQ~\cite{boolq}, HellaSwag~\cite{hellaswag} and WinoGrade~\cite{sakaguchi2021winogrande}.    
    
\end{itemize}

\subsection{Hardware.} 
Our main experiments were completed on a GPU workstation with an RTX A6000 48GB GPU and AMD Ryzen Threadripper PRO 5955WX CPU. The GPU resource was mainly used to fine-tune language modelling models for sequence classification, which is the requirement of the sentiment classification task. The inference latency of the low-end devices was measured on a Raspberry Pi 5, with a 64-bit Arm Cortex-A76 CPU and 8GB DRAM. The power meter we used is YOJOCK J7-c USB C Tester USB Power Meter, with a single refresh time of more than 500ms. 

\subsection{Evaluation Metrics}\label{sec:metrics}

\textbf{Compression Ratio.} Denote $\mathcal{M}$ as a model block set containing a list of model modules like embedding layers and attention layers. With $\mathcal{M}_0$ as the original model block set, $\mathcal{M}_\texttt{cmpr}$ as the compressed version of $\mathcal{M}_0$, and $\vert\mathcal{M}\vert$ as the parameter count of $\mathcal{M}$. The compression ratio $\eta$ is defined as  
\begin{equation}\label{def:cr}
  \eta = \frac{\vert \mathcal{M}_0 \vert - \vert\mathcal{M}_\texttt{cmpr}\vert}{\vert \mathcal{M}_\texttt{cmpr} \vert}. 
\end{equation}
Specifically, the embedding compression rate is $\eta_{\texttt{emb}} = \frac{\vert \mathcal{T}_0 \vert - \vert\mathcal{T}_\texttt{cmpr}\vert}{\vert \mathcal{T}_0 \vert}$, where $\mathcal{T}$ only contains token embedding layer and position embedding layer. 

\textbf{Perplexity and Logarithmic Perplexity.} We use perplexity ($\texttt{PPL}$) as our metrics of language modelling. Furthermore, we use the logarithmic form of perplexity ($\ln{\texttt{PPL}}$ ) and its change ($\Delta \ln{\texttt{PPL}}$) to align with the linearity of the compression ratio \cref{def:cr}, as defined in~\cref{eq:ppl}.

\textbf{Accuracy, Precision, Recall and F1-Score.} We use these four common evaluation metrics for classification to analyze the classification performance of the compressed model comprehensively. To investigate the performance change before and after compression, we use the difference between the metric values after and before the compression.

\textbf{Zero-shot Reasoning Scores.} For the metrics of reasoning tasks, we use the scores from~\citep{arc,boolq,hellaswag,piqa,sap2019social,sakaguchi2021winogrande}.

\textbf{Energy Consumption.}  Since the actual energy consumption depends on multiple uncontrollable factors, as we discussed in~\cref{sec:low-energy}, it is difficult to isolate compression energy cost from the actual measurements. Thus, we use similar approaches in~\citep{luo2024addition} to estimate the energy consumption.

We use the notations in~\cref{tab:complexity} and~\cref{eq:e1,eq:e2,eq:e3}, and approximate the ratio between computation energy cost and memory energy cost per $\texttt{fload32}$ data as $\frac{\nu}{\tau}=5$. 
Then, we got the configurations of the current open-source SLMs for the values of $d$, $V$ in ~\cref{eq:e1,eq:e2,eq:e3}.
Though we cannot get the actual energy costs, we can compare the inference energy costs of compressed and uncompressed models with this approach.

\section{On-device Latency Explained with Experimental Results}\label{sec:detailed-latency}

For the compression latency, we investigated the compression latency on the token level, as shown in~\cref{tab:decompose-latency}. Here, ``original'' means the uncompressed model, while $\texttt{PPL}_{\alpha}$ means the compressed model with a negligible task performance drop. In our case, ``negligible task performance drop'' means in the language modelling task, the perplexity is no more than $100.0$. The notation $\varphi_{\text{max}}$ refers to the compressed model with maximum compression ratio.
We observed that for individual token embeddings, there was no significant latency difference between high-end servers and Raspberry Pi, typically no more than 2 milliseconds for each token. Thus, it is acceptable for the Raspberry Pi to compress the individual token embeddings.

\begin{table}[h]\scriptsize
\centering
\caption{The latency (ms/token) of tensorization \& decomposition token embedding vectors and reconstruction on the high-end and lower-end devices. $\texttt{PPL}_{\alpha}$ means the compressed model with a negligible task performance drop, and the symbol $\varphi_{\text{max}}$ represents the case with a maximum compression ratio. $d_{\text{emb}}$ is the embedding dimension of the token embedding vector, and the tested models are GPT-2 and GPT-2-M. On the CPU level, for single token embedding vector decomposition and reconstruction, both server and edge devices have no significant computation overhead.}\label{tab:decompose-latency}

\begin{tabular}{l|l|cc|ll}
\toprule
\multirow{2}{*}{\begin{tabular}{c}
\textbf{Device (CPU)} \\
(ms/token)
\end{tabular}}              & \multirow{2}{*}{$d_{\text{emb}}$} & \multicolumn{2}{c|}{{\begin{tabular}[c]{@{}l@{}}\textbf{tensorization}\\ \& \textbf{decomposition}\end{tabular}}}             & \multicolumn{2}{c}{\textbf{reconstruction}}          \\ \cline{3-6} 
     &                   & \multicolumn{1}{c|}{$\texttt{PPL}_{\alpha}$} & \multicolumn{1}{c|}{$\varphi_{\text{max}}$} & \multicolumn{1}{c|}{$\texttt{PPL}_{\alpha}$} & \multicolumn{1}{c}{$\varphi_{\text{max}}$}  \\ \midrule
\multirow{2}{*}{\textbf{Server}}         &        768           & \multicolumn{1}{l|}{0.627}    &    1.429                          & \multicolumn{1}{l|}{0.117}    &     0.238                   \\ \cline{2-6} 
     &        1024           & \multicolumn{1}{l|}{0.452}    &      1.512                        & \multicolumn{1}{l|}{0.114}    &             0.261             \\ \midrule
\multirow{2}{*}{\textbf{Raspberry Pi 5}} &        768           & \multicolumn{1}{l|}{0.760}    &    1.948                            & \multicolumn{1}{l|}{0.330}    &            0.468       \\ \cline{2-6} 
     &        1024           & \multicolumn{1}{l|}{0.612}    &    2.148                          & \multicolumn{1}{l|}{0.364}    &   0.614                        \\ \bottomrule
\end{tabular}
\end{table}

For the inference latency of a single text, we chose a typical text length of 50 tokens, as shown in~\cref{tab:exp-profile}. we used ``original'', $\texttt{PPL}_{\alpha}$, $\varphi_{\text{max}}$ same as those in~\cref{tab:decompose-latency}, to represent the uncompressed model, the compressed model with a negligible task performance drop and the model with a maximum compression ratio. A typical induced latency for an input text was no more than $0.3$ seconds, which is acceptable for edge applications. 

It should be noted that the embedding reconstruction latency depends on both tensor shapes and flops, and the on-device memory management varies when models of different sizes are loaded. Consequently, in~\cref{tab:exp-profile}, flops alone does not provide a complete predictor of on-device inference latency.

{\bf The cases of} $\varphi_{\text{max}}$ {\bf are typically slower than the cases of} $\texttt{PPL}_{\alpha}$. We have demonstrated in~\cref{app:2-power} that the $\varphi_{\text{max}}$ (the maximum compression ratio) corresponds to cases where embedding vectors are compressed into TT-formatted tensors of the highest orders. During the forward passes, the TT-format of these tensors is decompressed order by order. For example, for an $ N$-order TT-formatted tensor, the decompression process involves $(N-1)$ {\it serial} matrix multiplications. 

This implies that the higher the tensor order, the more matrix multiplication rounds are executed, potentially resulting in slower decompression. A (not representative) exception is the compression for CerebrasGPT-256M, which has the $5$-order tensor shape $4\times2\times17\times4\times2$ for $\texttt{PPL}_{\alpha}$, and $7$-order tensor shape $2\times2\times2\times2\times17\times2\times2$ for $\varphi_{\text{max}}$. The decompression process for each embedding vector involves $4$ and $6$ matrix multiplications respectively, which differ by only two matrix multiplications — a relatively small gap compared to other cases  (e.g. $3$-order for $\texttt{PPL}_{\alpha}$ and $10$-order for $\varphi_{\text{max}}$). Thus, for CerebrasGPT-256M, the on-device inference latencies of $\texttt{PPL}_{\alpha}$ and $\varphi_{\text{max}}$ are similar, as shown in~\cref{tab:exp-profile}. 

Though this compression approach does not provide latency reduction benefits, it does offer advantages in the reduction of memory usage and energy consumption.

{\bf DistilGPT2 exhibits different flops-latency trends from the others}.  In~\cref{tab:exp-profile} the compression for DistilGPT2 has significantly less latency for $\varphi_{\text{max}}$ than $\texttt{PPL}_{\alpha}$, which contradicts the analysis in the preceding paragraph. A possible reason is the different memory scheduling processes of embedding layers and non-embedding layers. DistilGPT has the same embedding layer weight matrix size ($50257 \times 768$) as GPT-2, yet has significantly fewer non-embedding layer parameters (and hence $\sim 50\%$ fewer non-embedding memory pages during inference). This difference may lead to
distinct memory management dynamics.

\begin{table*}[t]\scriptsize
    \caption{Parameters, number of floating-point operations (flops) of the compressed and uncompressed sub-billion models, and latency on Raspberry Pi CPU. For flops, the token number of the input texts is $100$, while for latency on Raspberry Pi, the token number is $50$.}\label{tab:exp-profile}
    \centering
\begin{tabular}{ll?p{4cm}|l|l|l?l|l|l}
\toprule
\multicolumn{2}{c?}{\multirow{2}{*}{\textbf{GPT Models}}}                                & \multicolumn{4}{c?}{\textbf{GPT2}}                                                                          & \multicolumn{3}{c}{\textbf{CerebrasGPT}}                              \\ \cline{3-9} \addlinespace[3pt] 
\multicolumn{2}{l?}{}                                                    & \multicolumn{1}{c|}{{\bf DistilGPT2}} & \multicolumn{1}{c|}{{\bf GPT-2}} & \multicolumn{1}{c|}{{\bf GPT-2-M}} & \multicolumn{1}{c?}{{\bf GPT-2-L}} & \multicolumn{1}{l|}{{\bf 111M}} & \multicolumn{1}{l|}{{\bf 256M}} & {\bf 590M} \\ \midrule
\multicolumn{1}{l?}{\multirow{3}{*}{\begin{tabular}[c]{@{}l@{}}\textbf{\# Params}\\ (M)\end{tabular}}} & original               & \multicolumn{1}{l|}{$81.9$}           & \multicolumn{1}{l|}{$124.44$}     & \multicolumn{1}{l|}{$354.82$}       &  $774.03$      &  \multicolumn{1}{l|}{$111.05$}     & \multicolumn{1}{l|}{$255.98$}     &   $590.31$   \\ \cline{2-9} 
\multicolumn{1}{l?}{}                           & $\texttt{PPL}_{\alpha}$                   & \multicolumn{1}{l|}{$67.06$}           & \multicolumn{1}{l|}{$106.36$}     & \multicolumn{1}{l|}{$326.45$}       &   $734.28$     & \multicolumn{1}{l|}{$101.78$}     & \multicolumn{1}{l|}{$226.69$}     &   $543.45$   \\ %\cline{2-9} 
\multicolumn{1}{l?}{}                           & $\varphi_{\text{max}}$  & \multicolumn{1}{l|}{$43.45$}           & \multicolumn{1}{l|}{$85.99$}     & \multicolumn{1}{l|}{$303.88$}       &    $710.83$    & \multicolumn{1}{l|}{$71.87$}     & \multicolumn{1}{l|}{$200.59$}     &    $511.07$  \\ \midrule
\multicolumn{1}{l?}{\multirow{3}{*}{\begin{tabular}[c]{@{}l@{}}\textbf{flops} \\ ($10^6$/text )\end{tabular}}}     & original               & \multicolumn{1}{l|}{$20250$}           & \multicolumn{1}{l|}{$40490$}     & \multicolumn{1}{l|}{$142250$}       &    $330980$    & \multicolumn{1}{l|}{$14470$}     & \multicolumn{1}{l|}{$40400$}     &    $103060$  \\ \cline{2-9} 
\multicolumn{1}{l?}{}                           & $\texttt{PPL}_{\alpha}$                    & \multicolumn{1}{l|}{\tikzmark{distilstart1}$+ 1.65$}           & \multicolumn{1}{l|}{$+ 1.88$}     & \multicolumn{1}{l|}{$+ 3.11$ }       &  \multicolumn{1}{l?}{\tikzmark{Lstart1} $+ 2.30$}      & \multicolumn{1}{l|}{$+ 0.38$}     & \multicolumn{1}{l|}{$+ 1.63$}     &  $+ 2.30$   \\ %\cline{2-9} 
\multicolumn{1}{l?}{}                           & $\varphi_{\text{max}}$  & \multicolumn{1}{l|}{$+ 0.13$}           & \multicolumn{1}{l|}{\tikzmark{distilend1}$+ 0.13$}     & \multicolumn{1}{l|}{$+ 0.20$}       &    $+ 0.25$    & \multicolumn{1}{l|}{\tikzmark{Lend1}$+ 0.13$}     & \multicolumn{1}{l|}{$+ 0.12$}     &   $+ 0.26$   \\ \midrule
\multicolumn{1}{l?}{\multirow{3}{*}{\begin{tabular}[c]{@{}l@{}}\textbf{Latency on} \\ \textbf{Raspberry} \\ \textbf{Pi} (s/text)\end{tabular}}}   & original               & \multicolumn{1}{l|}{$0.19_{\pm 0.02}$}           & \multicolumn{1}{l|}{$0.50_{\pm 0.19}$}     & \multicolumn{1}{l|}{$1.23_{\pm 0.12}$}       &   \multicolumn{1}{l?}{ $3.01_{\pm 0.47}$}    & \multicolumn{1}{l|}{$0.47_{\pm 0.21}$}     & \multicolumn{1}{l|}{$0.71_{\pm 0.02}$}     &   $1.81_{\pm 0.25}$   \\ \cline{2-9} 
\multicolumn{1}{l?}{}                           & $\texttt{PPL}_{\alpha}$                     &  \multicolumn{1}{l|}{\tikzmark{distilstart2} $0.36_{\pm 0.19}$}          & \multicolumn{1}{l|}{$0.50_{\pm 0.16}$}     & \multicolumn{1}{l|}{$1.26_{\pm 0.22}$}       &  \multicolumn{1}{l?}{\tikzmark{Lstart2} $3.01_{\pm 0.29}$}     & \multicolumn{1}{l|}{$0.48_{\pm 0.23}$}     & \multicolumn{1}{l|}{$1.01_{\pm 0.29}$}     &    $1.89_{\pm 0.28}$  \\ %\cline{2-9} 
\multicolumn{1}{l?}{}                           & $\varphi_{\text{max}}$  & \multicolumn{1}{l|}{$0.19_{\pm 0.03}$}        &    \multicolumn{1}{l|}{\tikzmark{distilend2}$0.71_{\pm 0.38}$}     & \multicolumn{1}{l|}{$1.55_{\pm 0.36}$}       &    $3.52_{\pm 0.44}$    & \multicolumn{1}{l|}{\tikzmark{Lend2}$0.72_{\pm 0.42}$}     & \multicolumn{1}{l|}{$0.95_{\pm 0.27}$}     &   $1.91_{\pm 0.24}$   \\ \bottomrule
\end{tabular}
\begin{tikzpicture}[remember picture, overlay]
\def\colwidth{1.6}
\def\rowheight{3}
% DistilGPT2

\node[blue] at ($(distilend2) + (-0.6,-0.3)$) {$-47\%$};
\node[red] at ($(distilend2) + (-0.6+\colwidth,-0.3)$) {$+42\%$};
\node[red] at ($(distilend2) + (-0.6+2*\colwidth,-0.3)$) {$+23\%$};

\node[red] at ($(distilend1) + (-0.5,0.2)$) {\tiny$-92\%$};
\node[red] at ($(distilend1) + (-0.5+\colwidth,0.2)$) {\tiny$-93\%$};
\node[red] at ($(distilend1) + (-0.5+2*\colwidth,0.2)$) {\tiny$-94\%$};
\node[red] at ($(distilend1) + (-0.5+3*\colwidth,0.2)$) {\tiny$-89\%$};

% GPT-2-L
\node[red] at ($(Lend2) + (-0.6,-0.3)$) {$+16\%$};
\node[red] at ($(Lend2) + (-0.6+\colwidth,-0.3)$) {$+50\%$};
\node[blue] at ($(Lend2) + (-0.6+2*\colwidth,-0.3)$) {$-6\%$};
\node[red] at ($(Lend2) + (-0.6+3*\colwidth,-0.3)$) {$+1\%$};

\node[red] at ($(distilend1) + (-0.5+4*\colwidth,0.2)$) {\tiny$-66\%$};
\node[red] at ($(distilend1) + (-0.5+5*\colwidth,0.2)$) {\tiny$-93\%$};
\node[red] at ($(distilend1) + (-0.5+6*\colwidth,0.2)$) {\tiny$-84\%$};

\draw[->, blue] ($(distilend2) + (-0.4,0.35)$) to[bend left=45] ($(distilend2) + (-0.4,-0.05)$);
\draw[->, red] ($(distilend2) + (-0.4+\colwidth,0.35)$) to[bend left=45] ($(distilend2) + (-0.4+\colwidth,-0.05)$);
\draw[->, red] ($(distilend2) + (-0.4+2*\colwidth,0.35)$) to[bend left=45] ($(distilend2) + (-0.4+2*\colwidth,-0.05)$);
\draw[->, red] ($(distilend2) + (-0.4+3*\colwidth,0.35)$) to[bend left=45] ($(distilend2) + (-0.4+3*\colwidth,-0.05)$);
\draw[->, red] ($(distilend2) + (-0.4+4*\colwidth,0.35)$) to[bend left=45] ($(distilend2) + (-0.4+4*\colwidth,-0.05)$);
\draw[->, blue] ($(distilend2) + (-0.4+5*\colwidth,0.35)$) to[bend left=45] ($(distilend2) + (-0.4+5*\colwidth,-0.05)$);
\draw[->, red] ($(distilend2) + (-0.4+6*\colwidth,0.35)$) to[bend left=45] ($(distilend2) + (-0.4+6*\colwidth,-0.05)$);

\end{tikzpicture}
\end{table*}

\end{document}